\keywords{Shapley values, grouping, prediction explanation} 
\newtheorem{theorem}{Theorem}[section]
\newtheorem{lemma}{Lemma}[section]
\newtheorem{condition}{Condition}[section]
\newcommand{\indep}{{\perp\!\!\!\perp}}
\newcommand{\newsquare}{{\hfill{\ \vrule height0.5em width0.5em depth-0.0em}}}
\DeclareMathAlphabet\mathbfcal{OMS}{cmsy}{b}{n} 
\DeclareMathAlphabet{\mathcal}{OMS}{cmsy}{m}{n} 
\begin{document}
\title{groupShapley: Efficient prediction explanation with Shapley values for feature groups}
\shorttitle{groupShapley}

\author{Martin Jullum \email{jullum@nr.no} \and 
Annabelle Redelmeier \email{anr@nr.no} \and 
Kjersti Aas \email{kjersti@nr.no}}

\date{Feb 20th, 2021}

\maketitle

\begin{abstract} 
Shapley values has established itself as one of the most appropriate and theoretically sound frameworks for explaining predictions from complex machine learning models.
The popularity of Shapley values in the explanation setting is probably due to its unique theoretical properties.
The main drawback with Shapley values, however, is that its computational complexity grows exponentially in the number of input features, making it unfeasible in many real world situations where there could be hundreds or thousands of features. Furthermore, with many (dependent) features, presenting/visualizing and interpreting the computed Shapley values also becomes challenging.
The present paper introduces \textit{groupShapley}: a conceptually simple approach for dealing with the aforementioned bottlenecks.
The idea is to group the features, for example by type or dependence, and then compute and present Shapley values for these groups instead of for all individual features. Reducing hundreds or thousands of features to half a dozen or so, makes precise computations practically feasible and the presentation and knowledge extraction greatly simplified.
We prove that under certain conditions, \textit{groupShapley} is equivalent to summing the feature-wise Shapley values within each feature group. Moreover, we provide a simulation study exemplifying the differences when these conditions are not met. 
We illustrate the usability of the approach in a real world car insurance example, where \textit{groupShapley} is used to provide simple and intuitive explanations. 
\end{abstract}

\tableofcontents

\mainmatter

\section{Introduction}\label{sec:Intro}
Consider a predictive modelling/machine learning setting with a given model $f(\cdot)$ that takes an $M$ dimensional feature vector $\boldsymbol{x} = \{x_1, \dots, x_{M}\}$ as input and provides a prediction $f(\boldsymbol{x})$ of an unknown response $y$. Suppose that for a specific feature vector, $\boldsymbol{x} = \boldsymbol{x}^*$, we want to understand how the different features (or types of features) contribute to the specific prediction outcome $f(\boldsymbol{x}^*)$.
This task is called prediction explanation and is a type of \textit{local} model explanation, as opposed to a \textit{global} model explanation, which attempts to explain the full model at once, through concepts such as global feature importance \citep[Ch. 2]{molnar2020interpretable}.

Shapley values \citep{Shapley53} has established itself as one of the leading frameworks for prediction explanation. 
The methodology has, in particular, received increased interest following the seminal paper of \citet{Lundberg}. 
The feature-wise Shapley values $\phi_1,\ldots,\phi_M$ for a predictive model $f(\boldsymbol{x})$ at $\boldsymbol{x}=\boldsymbol{x}^*$ are given by
\begin{align}
\phi_j = \sum_{\mathcal{S} \subseteq \mathcal{M} \setminus\{j\}} \frac{|\mathcal{S}| ! (M-| \mathcal{S}| - 1)!}{M!}(v(\mathcal{S}\cup \{j\})-v(\mathcal{S})), \label{eq:shapley}
\end{align}
for $j = 1, \dots, M$, where $M$ is the total number of features, $\mathcal{M}$ is the set of all features, and $v(\cdot)$ is the characteristic function, also referred to as the \textit{contribution function} in the prediction explanation setting. 
Roughly speaking, the information obtained by observing feature $j$ modifies the prediction by the amount of its Shapley value, $\phi_j$.
Although there are different choices available for specifying the contribution function, the conditional expectation 
\begin{align}
v(\mathcal{S}) = \mathbb{E}[f(\bm{x}) | \bm{x}_\mathcal{S} = \bm{x}^*_\mathcal{S}], \label{eq:cond_expec}
\end{align} 
where $\boldsymbol{x}_{\mathcal{S}}$ denotes the subvector of $\boldsymbol{x}$ corresponding to feature subset $\mathcal{S}$, has the nice property of providing explanations which properly account for the dependence between the features \citep{chen2020true}.
This specific contribution function was originally the idea of \citet{Lundberg} and has since been used by several others \citep{aas2019explaining,aas2021explaining,frye2020shapley, redelmeier2020explaining}. 

\subsection{Challenges with the Shapley value framework}
There are, in principle, two challenges when it comes to the practical use of Shapley values for prediction explanation. Since the true (conditional) distribution of $\bm{x}$ is rarely known, the first challenge is estimating \eqref{eq:cond_expec} precisely and efficiently. Poor estimates may lead to severely inaccurate Shapley values and, therefore, also explanations and conclusions \citep{aas2019explaining}. We will not be dealing with this issue in the present paper and, instead, refer to \citet{aas2019explaining, frye2020shapley} for work along these lines.

Instead, this paper concerns itself with the second challenge of handling the fundamental computational complexity of the Shapley formula, which contains $2^{M-1}$ terms and therefore grows exponentially in the number of features $M$. Hence, even with limited time spent on appropriately estimating \eqref{eq:cond_expec}, the computational burden of the arithmetic operations in \eqref{eq:shapley} becomes intractable when there are more than one or two dozen features (depending on the computational resources and time available). 

There exists some model-agnostic approaches for approximating the Shapley value formula. One alternative is to, instead, sum over a smaller number of sampled feature subsets $\mathcal{S}$ \citep{covert2020improving, Lundberg}. However, as the number of features grows, the number of sampled subsets also needs to grow (exponentially) in order to retain an acceptable accuracy for the approximated Shapley values. Thus, this approach only minimally lifts the roof on the number of features that can be handled. In any case, when there are tens, hundreds or even thousands of features, which is not uncommon in the modern machine learning world, these approaches simply do not do. 
\citet{chen2018lshapley} and \citet{li2019efficient} also propose approaches for reducing the complexity of the Shapley value computation. These approaches are very different from the direct sampling approach above in that they assume that the contribution of features to the prediction respects the structure of an underlying graph, and are therefore not generally applicable. There are, however, some conceptual similarities with the method proposed in the present paper. We will return to this in Section \ref{sec:theory}.

When the number of features increases, it also becomes harder to visualize and extract information from Shapley values. This is especially the case when some of the features are (highly) dependent, which in most applied fields is the rule rather than the exception. When features are highly dependent, the joint contribution of these features is distributed among the features, resulting in many small Shapley values. Thus, interpreting the Shapley values and transferring this to practical knowledge requires understanding of the feature dependence structure and appropriate modifications of the interpretation. This clearly becomes an overwhelming task when there are more than a few features. 

Note that some Shapley based prediction explanation approaches \citep{janzing2020feature,sundararajan2020many} replace the contribution function $v(\mathcal{S})$ in \eqref{eq:cond_expec} by a so-called interventional conditional expectation \citep{chen2020true} which is easier to compute and, therefore, reduces the computational burden. However, the computational complexity still grows exponentially in the number of features; it just kicks in a bit later. Furthermore, these methods have the drawback of ignoring the feature dependence, which may lead to inconsistent explanations in the presence of feature dependence. There also exists algorithms for specific model classes, like TreeSHAP \citep{lundberg2020local2global} for tree-based models, which uses the feature dependence learned by the tree(s) to estimate the conditional expectations and efficiently compute Shapley values. However, as seen in \citet{aas2019explaining}, the resulting Shapley values may be highly inaccurate. Since the method is model-specific, it is also difficult to use in a development/testing stage where models of different types are compared.

\subsection{The present paper}
In this paper we propose a novel approach, \textit{groupShapley}, 
to bypass the aforementioned computational issue of explaining predictions using the Shapley value framework. 
The idea is to move away from computing Shapley values for single features and rather compute and present Shapley values for \textit{groups of features}. 
Our definition of grouped Shapley values simply replaces the individual features in \eqref{eq:shapley} by feature groups, creating perfectly well-defined Shapley values with all of the usual Shapley properties. 
The computational complexity is kept small as long as the number of groups is small.
In addition, both the presentation/visualization and the interpretation and knowledge extraction in the presence of high feature dependence, are simplified. 
Consider e.g.~the case when a few base features/data sources are used to construct a myriad of engineered features. Then, \textit{groupShapley} can help improve the interpretability of this model by only presenting Shapley values for the original base features/data sources.


The rest of the paper is organized as follows: Section \ref{sec:groupShapley} provides the mathematical definition of \textit{groupShapley} and discusses how groups can be constructed in different practical settings. 
We discuss and compare \textit{groupShapley} with other ways of establishing a Shapley based measure for the contribution of a group of features. Moreover, we present a theorem saying that under certain conditions, \textit{groupShapley} is equivalent to summing the feature-wise Shapley values within each feature group. Under the same conditions we also derive a simplified formula for the feature-wise Shapley values. In Section \ref{sec:simulations}, we provide a simulation study investigating how \textit{groupShapley} differs from the alternative method under different violations of the conditions 
where they are equal.
In Section \ref{sec:real_data}, we give a real world example using a car insurance data set. Finally, we provide some general discussion and concluding remarks in Section \ref{sec:conclusion}. The supplementary material accompanying the present paper contains proofs of the theoretical results from Section \ref{sec:groupShapley}.

\section{groupShapley}\label{sec:groupShapley}
Recall the predictive modelling/machine learning setting from Section \ref{sec:Intro} with the predictive model $f(\cdot)$ producing predictions based on an $M$-dimensional feature vector $\boldsymbol{x}$ that one wishes to explain. Let us now define a partition\footnote{A partition is a grouping of non-empty subsets of the elements of a set where each element is represented in exactly one group.} $\mathbfcal{G} = \{\mathcal{G}_1,\ldots,\mathcal{G}_G\}$ of the feature set $\mathcal{M}$, consisting of $G$ \textit{groups}. Then, the Shapley value for the $i$-th group $\mathcal{G}_i$ explaining the prediction $f(\boldsymbol{x}^*)$ is given by
\begin{align}
\phi_{\mathcal{G}_i} = \sum_{\mathcal{T} \subseteq \mathbfcal{G} \setminus \mathcal{G}_i} \frac{|\mathcal{T}|_g ! (G-| \mathcal{T}|_g - 1)!}{G!}(v(\mathcal{T}\cup \mathcal{G}_i)-v(\mathcal{T})), \label{eq:pre-grouped} 
\end{align}
where the summation index $\mathcal{T}$ runs over the groups (not the individual features) in the set of groups $\mathbfcal{G} \setminus \mathcal{G}_i$ and $|\mathcal{T}|_g$ refers to the number of groups (not the individual elements) in $\mathcal{T}$. \textit{groupShapley} also adopts the contribution function \eqref{eq:cond_expec}, meaning that 
\begin{align}
    v(\mathcal{T}) = \mathbb{E}[f(\boldsymbol{x})| \boldsymbol{x}_{\mathcal{T}} = \boldsymbol{x}^*_{\mathcal{T}}], \label{eq:vT}
\end{align}
where $\boldsymbol{x}_{\mathcal{T}}$ denotes the subvector of $\boldsymbol{x}$ corresponding to all features contained in all groups in $\mathcal{T}$. Thus, any procedure applicable for estimating \eqref{eq:cond_expec}, can also be directly applied to estimate \eqref{eq:vT}. As \textit{groupShapley} is simply the game theoretic Shapley value framework applied directly to groups of features instead of individual features, the \textit{groupShapley} values possess all the regular Shapley value properties. In particular, the \textit{efficiency property} states that $\sum_{i=1}^G \phi_{\mathcal{G}_i} = f(\boldsymbol{x}^*) - \mathbb{E}[f(\boldsymbol{x})]$, the \textit{symmetry property} roughly states that groups which contribute equally (regardless of the influence of other feature groups) have identical Shapley values, and the \textit{null player property} states that feature groups with no contribution to the prediction (either directly or through other features which it correlates with) have a Shapley value of zero. 

By directly comparing the formula for the feature-wise Shapley values in \eqref{eq:shapley} with the \textit{groupShapley} formula in \eqref{eq:pre-grouped}, we see that the computational complexity of the sum reduces from $2^{M-1}$ to $2^{G-1}$, which gives a relative computational cost reduction of $2^{M-G}$. With, for example, $M=50$ features and $G=5$ feature groups, the relative cost reduction is $>10^{13}$. 
Naturally, in some settings, reducing feature-based explanation to group-based explanation, may not provide a sufficient detail or disguise single feature contributions of interest. However, for large number of features, the benefits of \textit{groupShapley} far outweigh the risk of over-simplification.

\subsection{Defining feature groups}
With the general \textit{groupShapley} procedure established in \eqref{eq:pre-grouped}, the next question is how to define these groups. We propose two alternative strategies: Grouping based on 1) feature type, and 2) feature dependence. These two grouping strategies can be viewed as grouping based on the \textit{practical} properties or the \textit{theoretical} properties of the features, respectively. Note that there is no gold standard for grouping features and it may be informative to compute several explanations of the same predictions using different groupings.

The first strategy groups together features with similar practical meaning, origin or definition.
Consider, for instance, a housing price prediction model with a wide selection of features.
Then, groups made according to feature \textit{type} could be \textit{house attributes} (housing type, square footage, number of rooms, building year), \textit{luxury amenities} (has hot tub/swimming pool/wine cellar/home theatre), \textit{locality} (distance to nearest grocery store/public transport/gym), \textit{environmental/surrounding factors} (area crime rate/air pollution/traffic noise, balcony sunshine hours), and \textit{historical turnover} (previous sold price, details of any refurbishing). As a second example, consider a model that predicts whether a customer purchases a specific item, e.g.~as part of a recommendation system. Then, groups could be specified as follows: \textit{customer details} (gender, age, occupation), \textit{customer purchasing history} (previous purchases, last time item bought), \textit{date/time} (time of day purchases made, day of the week, any upcoming holiday), and \textit{pricing} (item price compared to other brands/stores, current sale, and quantity discount). In these cases, using \textit{groupShapley} may inform that the \textit{locality} of a very central apartment has increased its predicted price, or that it was the \textit{day/time} that reduced the probability of making a specific purchase on a Friday night. In Section \ref{sec:real_data}, we provide a real data example within car insurance where feature type grouping is used. 

The second strategy groups together features based on mutual dependence. The point of this approach is to simplify the interpretability of the computed Shapley values. As mentioned in Section \ref{sec:Intro}, the joint contribution of highly dependent features is spread out on the feature-wise Shapley values making it more difficult to detect the significance of these features. If such highly dependent features are grouped together, it is easier to spot these contributions and, therefore, infer a more `correct' explanation. Such a grouping can be created through a clustering method that uses a dependence/correlation based similarity measure, see e.g.~\citet{kaufman2009finding}. See also the treatment in \citet[Sec. 5]{aas2019explaining}.





\subsection{Other types of grouped Shapley values}
In the general game theoretic setting, using Shapley values to determine the influence of groups of players is not entirely new. \citet{marichal2007axiomatic} defines the concept of a generalized Shapley value for a set of players and studies its theoretical properties. \citet[Ch. 13]{algaba2019handbook} refers to this as the Shapley group value, studies the behavior of the players when acting as a group rather than individually, and applies it to social network `games'.
An important distinction between this methodology and \textit{groupShapley} is that they study how a single group of players acts relative to the other \textit{individual} players, while we compare groups with other groups.  
This makes their method suitable for studying concepts such as which players gain from joining a cooperative group and which are stronger on their own, rather than comparing the relative performance of disjoint groups.
Furthermore, their formulation still has a computational complexity that grows exponentially in the number of players/features. We will, therefore, not devote more space to this method. 







An alternative way to define the contribution of a group of features based on the Shapley value framework is to utilize the additive structure of Shapley values and simply take the sum of the feature-wise Shapley values in \eqref{eq:shapley} for the features within each group. That is, for each group $\mathcal{G}_i$, $i=1,\ldots,G$, we define
\begin{align}
\phi_{\text{post}-\mathcal{G}_i} = \sum_{j \in \mathcal{G}_i} \phi_{j},
\label{eq:post-grouped}
\end{align}
where the summation index $j$ runs over the features in group $\mathcal{G}_i$ and $\phi_j$ is the Shapley value for feature $j$ computed using \eqref{eq:shapley}. 
We call this \textit{post-grouped Shapley} since we explicitly calculate the Shapley values for \textit{individual features} first and then group these Shapley values afterwards. In Section \ref{sec:theory}, we show that under certain conditions, \textit{groupShapley} and \textit{post-grouped Shapley} are identical. However, this is generally not the case, as exemplified through a series of simulation experiments in Section \ref{sec:simulations}. 
Since \textit{groupShapley} provides proper Shapley values when using \eqref{eq:vT} and Shapley values are unique, \textit{post-grouped Shapley} does \textit{not} in general correspond to proper Shapley values using \eqref{eq:vT}.
Moreover,  \eqref{eq:post-grouped} is inherently impractical to compute when there are many features, as it requires the computation of all feature-wise Shapley values -- exactly the bottleneck we are trying to avoid.
That being said, \eqref{eq:post-grouped} is a natural approach to calculating group contributions that has been used before \citep{aas2019explaining, redelmeier2020explaining}. 
Therefore, we will use \textit{post-grouped Shapley} as a reference method when studying the properties of \textit{groupShapley}. Note, however, that there is no reason to prefer \textit{post-grouped Shapley} since it does not provide proper Shapley values and takes longer to compute.

\subsection{Comparing groupShapley with post-grouped Shapley}\label{sec:theory}
In this section we provide two conditions and then show two different theoretical results that hold under the conditions: 1) The feature-wise Shapley values take a simplified form and 2) \textit{groupShapley} and \textit{post-grouped Shapley} are equivalent, i.e. $\phi_{\mathcal{G}_i} = \phi_{\text{post}-\mathcal{G}_i}$, for $i = 1, \dots, G$. We first state the two conditions and then present the results through two theorems. The proofs of the theorems are given in the supplementary material.
\begin{condition}[Partially additively separable]\label{con:additive}
	The predictive function $f(\boldsymbol{x})$ is partially additively separable with respect to the partition $\mathbfcal{G} = \{\mathcal{G}_1,\ldots, \mathcal{G}_G\}$, i.e.~we may write
	\[
	f(\boldsymbol{x}) = \sum_{i = 1}^G f_{\mathcal{G}_i}(\boldsymbol{x}_{\mathcal{G}_i}),
	\]
	where for all $\mathcal{G}_i \in \mathbfcal{G}$, $\boldsymbol{x}_{\mathcal{G}_i}$ is the subset of $\boldsymbol{x}$ corresponding to the features in subset $\mathcal{G}_i$ and $f_{\mathcal{G}_i}$ is a real function involving only $\boldsymbol{x}_{\mathcal{G}_i}$.
\end{condition}

\begin{condition}[Group independence]\label{con:indep}
	All feature groups in partition $\mathbfcal{G}$ are independent ($\boldsymbol{x}_{\mathcal{G}_i} \indep \boldsymbol{x}_{\mathcal{G}_j}$ for all $i \neq j$), meaning that all features in $\mathcal{G}_i$ are mutually independent of all the features in $\mathcal{G}_j$, for all $i\neq j$.
\end{condition}

Before turning to the theorems, let us introduce the contribution function corresponding to the sub-functions from Condition \ref{con:additive}:
$v_{\mathcal{G}_i}(\mathcal{S}) = \mathbb{E}[f_{\mathcal{G}_i}(\boldsymbol{x}_{\mathcal{G}_i}) | \boldsymbol{x}_\mathcal{S} = \boldsymbol{x}_\mathcal{S}^*]$ for any $\mathcal{S} \subseteq \mathcal{G}_i$.

\begin{theorem}[Simplified feature-wise Shapley formula]\label{lem:simplified_shap}
	Assume Conditions \ref{con:additive} and \ref{con:indep} hold.
	Then for any $j \in \mathcal{G}_i$, $i = 1\ldots G$, the Shapley formula in \eqref{eq:shapley} simplifies to
	\begin{align}
    	\phi_j &= \sum_{\mathcal{S} \subseteq \mathcal{G}_i \setminus\{j\}}
    	\frac{|\mathcal{S}|! (|\mathcal{G}_i|-|\mathcal{S}|-1)!}{|\mathcal{G}_i|!} (v_{\mathcal{G}_i}(\mathcal{S} \cup \{j\})-v_{\mathcal{G}_i}(\mathcal{S})), \label{eq:Shapley_indep}
    \end{align}
    i.e.~the $\phi_j, j \in \mathcal{G}_i$ are identical to the Shapley values for explaining the predictive model $f_{\mathcal{G}_i}(\boldsymbol{x}_{\mathcal{G}_i})$.
\end{theorem}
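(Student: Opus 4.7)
The plan is to start from the original Shapley formula \eqref{eq:shapley} and show that, under Conditions \ref{con:additive} and \ref{con:indep}, the summand is independent of the part of $\mathcal{S}$ that lies outside the group $\mathcal{G}_i$ containing $j$, so that the sum over subsets of $\mathcal{M} \setminus \{j\}$ collapses to a sum over subsets of $\mathcal{G}_i \setminus \{j\}$ via a Vandermonde-type weight identity.

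First, for an arbitrary $\mathcal{S} \subseteq \mathcal{M} \setminus \{j\}$, I would decompose $\mathcal{S} = \mathcal{A} \sqcup \mathcal{B}$ where $\mathcal{A} = \mathcal{S} \cap \mathcal{G}_i \subseteq \mathcal{G}_i \setminus \{j\}$ and $\mathcal{B} = \mathcal{S} \setminus \mathcal{G}_i$. Applying Condition \ref{con:additive} gives $v(\mathcal{S}) = \sum_{k=1}^{G} \mathbb{E}[f_{\mathcal{G}_k}(\boldsymbol{x}_{\mathcal{G}_k}) \mid \boldsymbol{x}_{\mathcal{S}} = \boldsymbol{x}^*_{\mathcal{S}}]$ by linearity of expectation, and Condition \ref{con:indep} lets me drop conditioning on coordinates outside $\mathcal{G}_k$, yielding $v(\mathcal{S}) = \sum_{k=1}^G v_{\mathcal{G}_k}(\mathcal{S} \cap \mathcal{G}_k)$. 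Since $j \in \mathcal{G}_i$, the corresponding decomposition of $v(\mathcal{S} \cup \{j\})$ differs only in its $k=i$ term, so
\[
v(\mathcal{S} \cup \{j\}) - v(\mathcal{S}) = v_{\mathcal{G}_i}(\mathcal{A} \cup \{j\}) - v_{\mathcal{G}_i}(\mathcal{A}),
\]
which crucially depends on $\mathcal{S}$ only through $\mathcal{A}$.

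Next, I would substitute this back into \eqref{eq:shapley} and reorganize the sum over $\mathcal{S}$ as a double sum over $\mathcal{A} \subseteq \mathcal{G}_i \setminus \{j\}$ and $\mathcal{B} \subseteq \mathcal{M} \setminus \mathcal{G}_i$. Writing $m_i = |\mathcal{G}_i|$, $a = |\mathcal{A}|$ and grouping the inner sum by $b = |\mathcal{B}|$, the target becomes the combinatorial identity
\[
\sum_{b=0}^{M-m_i} \binom{M-m_i}{b} \frac{(a+b)!\,(M-a-b-1)!}{M!} = \frac{a!\,(m_i-a-1)!}{m_i!}.
\]
This is the step I expect to be the main technical hurdle; the cleanest route I would take is to recognize $\frac{(a+b)!(M-a-b-1)!}{M!} = B(a+b+1, M-a-b) = \int_0^1 t^{a+b}(1-t)^{M-a-b-1}\,dt$ and then apply the binomial theorem to collapse $\sum_b \binom{M-m_i}{b} t^b (1-t)^{M-m_i-b} = 1$ inside the integral, leaving $B(a+1, m_i-a)$. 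An equivalent direct derivation proceeds by induction on $M - m_i$ using Pascal's rule.

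With that identity in hand, plugging the simplified weight back in yields exactly \eqref{eq:Shapley_indep}, and recognizing the resulting expression as the Shapley formula of size $m_i$ applied to $f_{\mathcal{G}_i}$ with contribution function $v_{\mathcal{G}_i}$ gives the final interpretation stated in the theorem.
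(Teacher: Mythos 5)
Your proposal is correct and follows essentially the same route as the paper's proof: decompose the coalition into its within-group and outside-group parts, use the two conditions to show the marginal contribution $v(\mathcal{S}\cup\{j\})-v(\mathcal{S})$ depends only on $\mathcal{S}\cap\mathcal{G}_i$ (the paper packages this as two separate lemmas, which you fold into the single identity $v(\mathcal{S})=\sum_k v_{\mathcal{G}_k}(\mathcal{S}\cap\mathcal{G}_k)$), and then collapse the weight sum via exactly the same Beta-integral and binomial-theorem argument. The combinatorial identity you isolate is the one the paper proves, and your version of the final Beta value $\frac{a!\,(m_i-a-1)!}{m_i!}$ is the correct one (the paper's appendix has a typographical slip with a missing factorial there).
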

Theorem \ref{lem:simplified_shap} does not only provide a simplified formula for the feature-wise Shapley values which reduces the computation complexity from $2^{M-1}$ to $2^{|\mathcal{G}_i|-1}$ under the stated conditions. It also says that under the stated conditions, the Shapley values for $f(\boldsymbol{x}^*)$ can actually be obtained by computing the Shapley values for $f_{\mathcal{G}_i}(\boldsymbol{x}_{\mathcal{G}_i})$ based only on the $|\mathcal{G}_i|$ features relevant for that function, for $i = 1,\ldots,G$. 
An important special case of Theorem \ref{lem:simplified_shap} is when a single feature $j$ is independent of all the other features and joins the predictive model formula $f(\boldsymbol{x})$ in a purely additive way. Then, $\phi_j = f_j(x_j^*) - \mathbb{E}[f_j(x_j)]$, which generalizes a well known simplification of the Shapley value formula for linear models with independent features, see e.g.~\citet[Section 2.2]{aas2019explaining}.
The formula in \eqref{eq:Shapley_indep} of Theorem \ref{lem:simplified_shap} is conceptually similar to a simplified Shapley value formula in \citet{li2019efficient}. They do, however, work with a different contribution function and rely on an assumed underlying graph structure.


Returning to feature groups, the theorem below states the special case equivalence between \textit{groupShapley} and \textit{post-grouped Shapley}.
\begin{theorem}[groupShapley equivalence]\label{thm:shap_groupShapley}
	Assume Conditions \ref{con:additive} and \ref{con:indep} hold. Then,
	 \begin{align}
         \phi_{\text{post}-\mathcal{G}_i} = \phi_{\mathcal{G}_i} = v_{\mathcal{G}_i}(\mathcal{G}_i)-v_{\mathcal{G}_i}(\emptyset), \quad i = 1,\ldots, G, \notag
     \end{align}        
i.e.~post-grouping in terms of \eqref{eq:post-grouped} is equivalent to groupShapley.
\end{theorem}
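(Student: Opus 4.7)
The plan is to establish the two claimed equalities separately and then chain them. For the post-grouped side $\phi_{\text{post}-\mathcal{G}_i} = v_{\mathcal{G}_i}(\mathcal{G}_i)-v_{\mathcal{G}_i}(\emptyset)$, Theorem \ref{lem:simplified_shap} already tells us that each feature-wise $\phi_j$ with $j\in\mathcal{G}_i$ coincides with the standard Shapley value of the reduced cooperative game whose players are the elements of $\mathcal{G}_i$ and whose characteristic function is $v_{\mathcal{G}_i}$. Applying the efficiency property of Shapley values to this reduced game then immediately yields $\sum_{j\in\mathcal{G}_i}\phi_j = v_{\mathcal{G}_i}(\mathcal{G}_i)-v_{\mathcal{G}_i}(\emptyset)$, which by definition is $\phi_{\text{post}-\mathcal{G}_i}$.

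For the groupShapley side, the key step is to show that the marginal contribution $v(\mathcal{T}\cup\mathcal{G}_i)-v(\mathcal{T})$ reduces to the constant $v_{\mathcal{G}_i}(\mathcal{G}_i)-v_{\mathcal{G}_i}(\emptyset)$, independent of which collection of groups $\mathcal{T}\subseteq\mathbfcal{G}\setminus\mathcal{G}_i$ is chosen. I would unfold the contribution function \eqref{eq:vT} using Condition \ref{con:additive}, writing $v(\mathcal{T}) = \sum_{k=1}^G \mathbb{E}[f_{\mathcal{G}_k}(\boldsymbol{x}_{\mathcal{G}_k})\mid \boldsymbol{x}_\mathcal{T}=\boldsymbol{x}^*_\mathcal{T}]$, and then invoke Condition \ref{con:indep}: because $\mathcal{T}$ is a union of complete groups from the partition, each term with $\mathcal{G}_k\in\mathcal{T}$ collapses to the observed value $f_{\mathcal{G}_k}(\boldsymbol{x}^*_{\mathcal{G}_k}) = v_{\mathcal{G}_k}(\mathcal{G}_k)$, while each term with $\mathcal{G}_k\notin\mathcal{T}$ collapses to the unconditional mean $\mathbb{E}[f_{\mathcal{G}_k}(\boldsymbol{x}_{\mathcal{G}_k})] = v_{\mathcal{G}_k}(\emptyset)$ by group independence. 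Taking the difference then cancels every summand except the one with $k=i$, leaving exactly $v_{\mathcal{G}_i}(\mathcal{G}_i)-v_{\mathcal{G}_i}(\emptyset)$ regardless of $\mathcal{T}$.

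Pulling this constant outside the sum in \eqref{eq:pre-grouped} reduces $\phi_{\mathcal{G}_i}$ to $(v_{\mathcal{G}_i}(\mathcal{G}_i)-v_{\mathcal{G}_i}(\emptyset))\,W$, where $W=\sum_{\mathcal{T}\subseteq\mathbfcal{G}\setminus\mathcal{G}_i} |\mathcal{T}|_g!\,(G-|\mathcal{T}|_g-1)!/G!$. Grouping the subsets of $\mathbfcal{G}\setminus\mathcal{G}_i$ by their cardinality $k$, of which there are $\binom{G-1}{k}$, a one-line simplification collapses $W$ to $\sum_{k=0}^{G-1} 1/G = 1$ (this is the standard identity underlying the Shapley efficiency axiom). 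Combining this with the post-grouped identity from the first paragraph yields the theorem.

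The main obstacle is the careful bookkeeping in the middle paragraph, where both conditions must be used simultaneously: additive separability is what allows the expectation to split across groups, while group independence is what makes each resulting summand either trivially the plugged-in value or a plain unconditional mean, according to whether the relevant group lies inside $\mathcal{T}$. Neither condition alone suffices. Dropping Condition \ref{con:additive} would introduce cross-group interaction terms in $f$ whose conditional expectations do not factorise, while dropping Condition \ref{con:indep} would mean conditioning on $\boldsymbol{x}_\mathcal{T}$ still conveys information about $\boldsymbol{x}_{\mathcal{G}_k}$ for $k\notin\mathcal{T}\cup\{i\}$, so that $v(\mathcal{T}\cup\mathcal{G}_i)-v(\mathcal{T})$ would genuinely depend on $\mathcal{T}$ and could not be pulled outside the weighted sum.
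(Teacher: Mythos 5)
Your proof is correct and follows essentially the same route as the paper's: Theorem \ref{lem:simplified_shap} plus efficiency for the post-grouped side, and for the groupShapley side the observation that $v(\mathcal{T}\cup\mathcal{G}_i)-v(\mathcal{T})$ is the constant $v_{\mathcal{G}_i}(\mathcal{G}_i)-v_{\mathcal{G}_i}(\emptyset)$ so that the Shapley weights sum to one. The only cosmetic difference is that the paper obtains the constancy of the marginal contribution by invoking a separately proved contribution-function identity (Lemma \ref{cor:vS}), whereas you re-derive the needed special case inline via the same additive-plus-independence decomposition of $v(\mathcal{T})$.
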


Theorem \ref{thm:shap_groupShapley}  does not only say that \textit{groupShapley} and \textit{post-grouped Shapley} are equivalent under the stated conditions, but also that the Shapley values are very easy to compute. 
Conditions  \ref{con:additive} and \ref{con:indep} are, of course, rarely met in practical situations. 
It is important to note that this does not, in any way, invalidate \textit{groupShapley} or \textit{post-grouped Shapley}; it just means that the two approaches do not necessarily always yield identical explanations. 

\section{Simulations}\label{sec:simulations}
We perform a simulation study with the goal of uncovering differences between group values calculated under \textit{groupShapley} and \textit{post-grouped Shapley}. The idea is to start with a model satisfying Conditions \ref{con:additive} and \ref{con:indep} and then show the differences between \textit{groupShapley} and \textit{post-grouped Shapley} as we violate these conditions. We defy the \textit{partially additively separable} condition by including pair-wise linear and non-linear interactions in our models. We defy the \textit{group independence} condition by adding dependence between features that belong to different groups. 
Note that Conditions \ref{con:additive} and \ref{con:indep} are \textit{sufficient}. Thus, if the conditions do not hold, this does \textit{not} necessarily mean that \textit{groupShapley} and \textit{post-grouped Shapley} will yield different results. We explore this in more detail below.

There are three versions of the simulation study: Experiment 1 uses a linear regression model; Experiment 2 uses a generalized additive model; and Experiment 3 uses both of these models and a slightly different feature covariance matrix.
The full parameters of the simulation study are:
\begin{itemize}
    \item \textit{Underlying feature distribution model}: We simulate 10 features with a multivariate Gaussian distribution $p(\bm{x}) = N_{10}(\bm{0}, \Sigma)$. This makes it easy to insert dependence between features. 
    \begin{itemize}
        \item $\Sigma$: The covariance matrix of the joint Gaussian distribution when simulating the features $\boldsymbol{x}$. For Experiments 1 and 2, we set the variance to 1 and the correlation $\rho$ which takes values in $\{0, 0.1, 0.3, 0.7, 0.9 \}$. For Experiment 3, we use a slightly different $\Sigma$ discussed in that section.
    \end{itemize}
    \item \textit{n\_test}: The number of testing observations. Set to \textit{n\_test} = 100.
    \item \textit{Groups}: We put the 10 features into two distinct groupings/partitions. $\mathbfcal{G}_A$ has $G=3$ groups: $\mathcal{G}_1 = \{1, 2, 3, 4\}$, $\mathcal{G}_2 = \{5, 6, 7, 8\}$, and $\mathcal{G}_3 = \{9, 10\}$.  $\mathbfcal{G}_B$ has $G=5$ groups: $\mathcal{G}_1 = \{1, 2\}$, $\mathcal{G}_2 = \{3, 4\}$, $\mathcal{G}_3 = \{5, 6\}$, $\mathcal{G}_4 = \{7, 8\}$, and $\mathcal{G}_5 = \{9, 10\}$.
    \item \textit{Predictive model}: We use two different types of predictive models: a linear regression model (lm) and a generalized additive model (GAM). Both lm and GAM are used to fit three different models: one model without interactions, one model with interactions between features of the \textit{same group}, and one model with interactions between features of \textit{different groups}. We discuss each response function in more detail in Sections \ref{subsec:exper1} and \ref{subsec:exper2}. 
\end{itemize}

Once the features are simulated, we compute the Shapley values of each group using both \eqref{eq:pre-grouped} and \eqref{eq:post-grouped}.
We compare the results between \eqref{eq:pre-grouped} and \eqref{eq:post-grouped} using the \textit{Mean Absolute Deviation} (MAD). 
Assuming $G$ groups, the MAD for individual $i$ is defined as:
\begin{align}\label{eq:MAD}
\text{MAD}_i = \frac{1}{G} \sum_{j=1}^{G} |\phi_{\text{pre}-\mathcal{G}_j}(\bm{x}_i) - \phi_{\text{post}-\mathcal{G}_j}(\bm{x}_i)|,
\end{align}
where $\phi_{\text{pre}-\mathcal{G}_j}(\bm{x}_i)$ and $\phi_{\text{post}-\mathcal{G}_j}(\bm{x}_i)$ denote, respectively, the value estimated with \textit{groupShapley} and the value estimated with \textit{post-grouped Shapley}.
When calculating the \textit{groupShapley} and \textit{post-grouped Shapley} values, we use the true Gaussian distribution to compute \eqref{eq:vT} and \eqref{eq:cond_expec}, since we are not concerned with the estimation of those.
We use Monte Carlo integration with 1000 samples from the true Gaussian distribution to compute the conditional expectations. This introduces some randomness in our empirical results, and is the reason that the MAD is not exactly zero under Conditions \ref{con:additive} and \ref{con:indep}, see Figures \ref{fig:exper1-groupAB}-\ref{fig:exper3-groupAB}. 

To be able to compare the MAD between different experiments
and parameters, the predictive functions specified in the next subsections are standardized 
such that all Shapley value computations are performed on predictive functions with standard deviation 1 (the mean values cancel each other in the MAD formula).

\subsection{Experiment 1: linear regression model}\label{subsec:exper1}
In this experiment, we explore how \textit{groupShapley} and \textit{post-grouped Shapley} vary when using a linear regression model. 
We fit three linear models: 
\begin{itemize}
    \item $lm_1$ uses a simple linear combination of features \textbf{without interactions}: $f_{\text{lm},1}(\bm{x}) = \beta_0 + \sum_{i=1}^{10} \beta_i x_i$, where $\bm{\beta} = \{
      -0.6,   0.2,  -0.8,   1.6,   0.3,  -0.8,  0.5,   0.7,   0.6,  -0.3,   1.5\}$.
    \item $lm_2$ introduces \textbf{interactions between some features of the same group}: $f_{\text{lm},2}(\bm{x}) = f_{\text{lm},1}(\bm{x}) + \gamma_{1} \cdot x_1\cdot x_2 + \gamma_{2} \cdot x_3 \cdot x_4 + \gamma_{3} \cdot x_5 \cdot x_6 + \gamma_{4} \cdot x_7 \cdot x_8 + \gamma_{5} \cdot x_9 \cdot x_{10}$, where $\bm{\gamma} = \{0.4,  -0.6, -2.2,  1.1, 0.0\}$.
    \item $lm_3$ introduces \textbf{interactions between features in different groups}: $f_{\text{lm},3}(\bm{x}) = f_{\text{lm},1}(\bm{x}) + \gamma_{1} \cdot x_1 \cdot x_5 + \gamma_{2} \cdot x_1 \cdot x_7 + \gamma_{3} \cdot x_1 \cdot x_9 + \gamma_{4} \cdot x_3 \cdot x_5 +\gamma_{5} \cdot x_3 \cdot x_7 + \delta_{1} \cdot x_3 \cdot x_9 + \delta_{2} \cdot x_5 \cdot x_9$, where $\bm{\delta} = \{0.1,  0.9\}$ and $\bm{\gamma}$ are as above. 
\end{itemize}
Since models $lm_1$ and $lm_2$ don't have interactions between features of different groups, we can observe strictly what happens when we violate the group independence condition. Note that when we add dependence, for example setting $\rho = 0.3$, this experiment does not distinguish between features of the same group and of different groups. This is extended in Section \ref{subsec:exper3}.

The results of Experiment 1 are in Figure \ref{fig:exper1-groupAB}. Notice that for a simple linear model, interactions between features of different groups are \textit{not enough} to increase the MAD. However, adding stronger dependence between all features (but specifically features of different groups -- see Section \ref{subsec:exper3}), increases the MAD. 
Although this trend is seen for both groupings $\mathbfcal{G}_A$ and $\mathbfcal{G}_B$, the mean, median, and spread is larger for grouping $\mathbfcal{G}_A$ (with only three groups). The reason may be that more features \textit{per group} adds more variance to the \textit{post-grouped Shapley} values. 



\subsection{Experiment 2: generalized additive model}\label{subsec:exper2}
In this experiment, we explore how the difference between \textit{groupShapley} and \textit{post-grouped Shapley} varies when using a GAM. This experiment is analogous to Experiment 1 but with a non-linear base model and (pairwise) non-linear interactions. The non-linear interactions takes the form $h(a, b) = a\cdot b + a\cdot b^2 + b\cdot a^2$. The three models are as follows:
\begin{itemize}
    \item $GAM_1$ uses a simple linear combination of cosines \textbf{without interactions}: $f_{\text{GAM},1}(\bm{x}) = \beta_0 + \sum_{i=1}^{10}  \cos(x_i)$ and $\beta_0 = -0.6$.
    \item $GAM_2$ introduces \textbf{interactions between some features of the same group}.  The entire model is $f_{\text{GAM},2}(\bm{x}) = f_{\text{GAM},1}(\bm{x}) + h(x_1, x_2) + h(x_3, x_4) + h(x_5, x_6) + h(x_7, x_8) + h(x_9, x_{10})$.
    \item $GAM_3$ introduces \textbf{interactions between features in different groups}. The entire model is $f_{\text{GAM},3}(\bm{x}) = f_{\text{GAM},1}(\bm{x}) + h(x_1,x_5) + h(x_1, x_7) + h(x_1, x_9) + h(x_3,  x_5) + h(x_3, x_7) + h(x_3, x_9) + h(x_5, x_9)$. 
\end{itemize}

Notice that these models are similar to those in Experiment 1, except that we use cosines when adding single features to the model and a bivariate function $h(a, b)$ when adding interactions to the model. 
The results are given in Figure \ref{fig:exper2-groupAB}. 
For the models without interactions ($GAM_1$) and with within-group interactions ($GAM_2$), the MAD increases with increasing correlation and otherwise behave fairly similar to their linear counterparts in Experiment 1.
On the other hand, the model with between-group interactions ($GAM_3$) behaves completely different compared to its linear counterpart. It has a large MAD already for independent features, which does not seem to increase further with increasing correlation.

\subsection{Experiment 3: different dependence within and between feature groups}\label{subsec:exper3}

In this experiment, we fix the dependence between features of the \textit{same} group and vary the dependence between features of \textit{different} groups. The covariance matrix $\Sigma$ continues to have 1 on the diagonal. 
However, the correlation for features belonging to the \text{same} group is now fixed to $0.87$, while the correlations between features in different groups takes values in $\{0, 0.1, 0.3, 0.7, 0.9 \}$.
We focus on models $lm_2$ and $GAM_2$ defined above since we are only interested in violating the group independence condition.
The point of this experiment is to investigate the effect of increased between-group correlation, when holding the within-group correlation fixed.

The results for both $lm_2$ and $GAM_2$ are given in Figure \ref{fig:exper3-groupAB}. 
We notice that even if there is strong dependence between features within the same group, small between-group correlation gives a small MAD. 
For grouping $\mathbfcal{G}_A$, any increase in the between-group correlation increases the MAD. For grouping $\mathbfcal{G}_B$, this is not the case and only a strong between-group correlation results in an increase in MAD. 
Like Experiment 1, the MAD of grouping $\mathbfcal{G}_A$ seems to have a larger mean, median, and  spread than grouping $\mathbfcal{G}_B$ across all correlations.
\begin{figure}
    \centering
    \includegraphics[width = 17.4cm, height = 6cm]{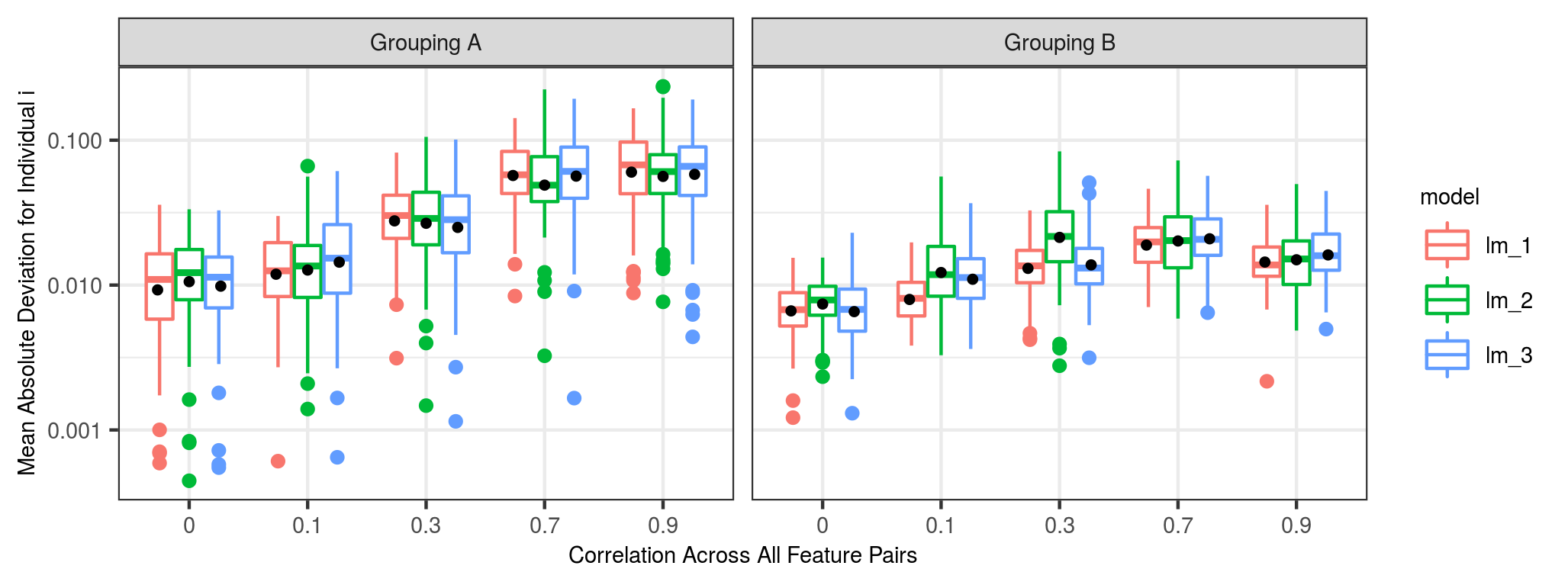}
    \caption{Boxplots of the mean absolute difference on log scale for all three lm models fit in Experiment 1. Black dot indicates the mean.}\label{fig:exper1-groupAB}
\end{figure}
\begin{figure}
    \centering
    \includegraphics[width = 17.4cm, height = 6cm]{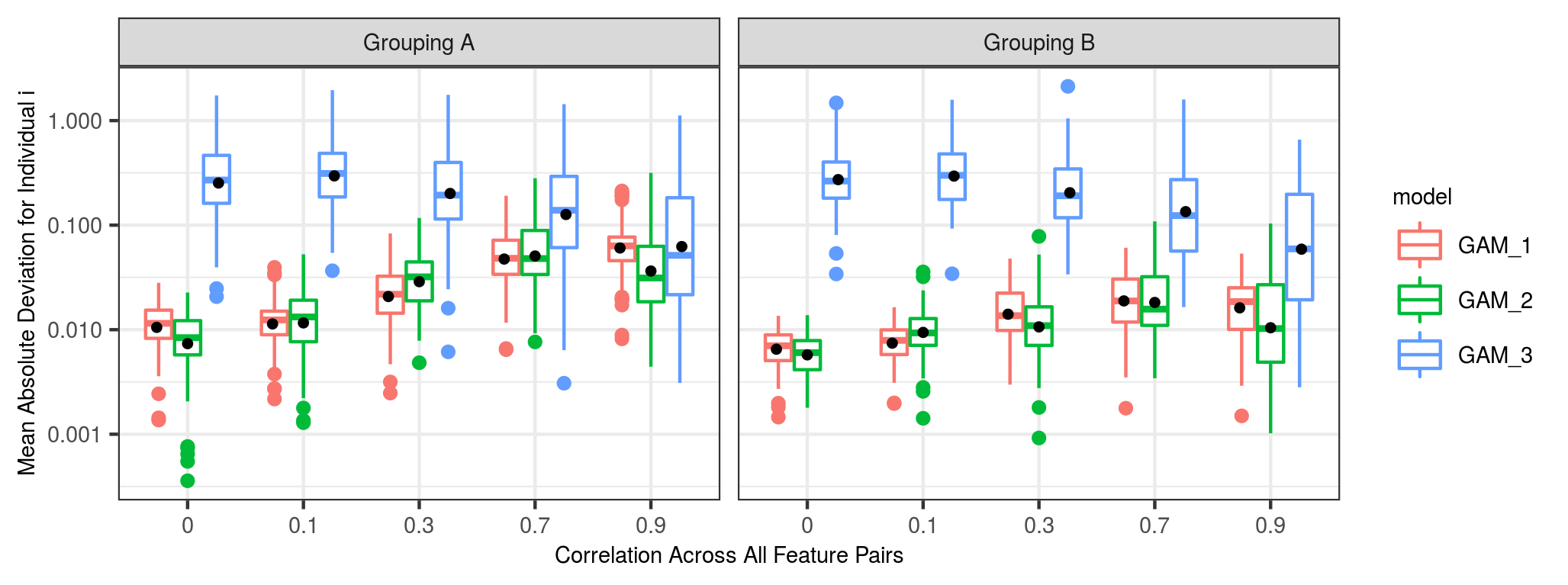}
    \caption{Boxplots of the mean absolute difference on log scale  for all three GAMs fit in Experiment 2. Black dot indicates the mean.}\label{fig:exper2-groupAB}
\end{figure}
\begin{figure}
    \centering
    \includegraphics[width = 17.4cm, height = 6cm]{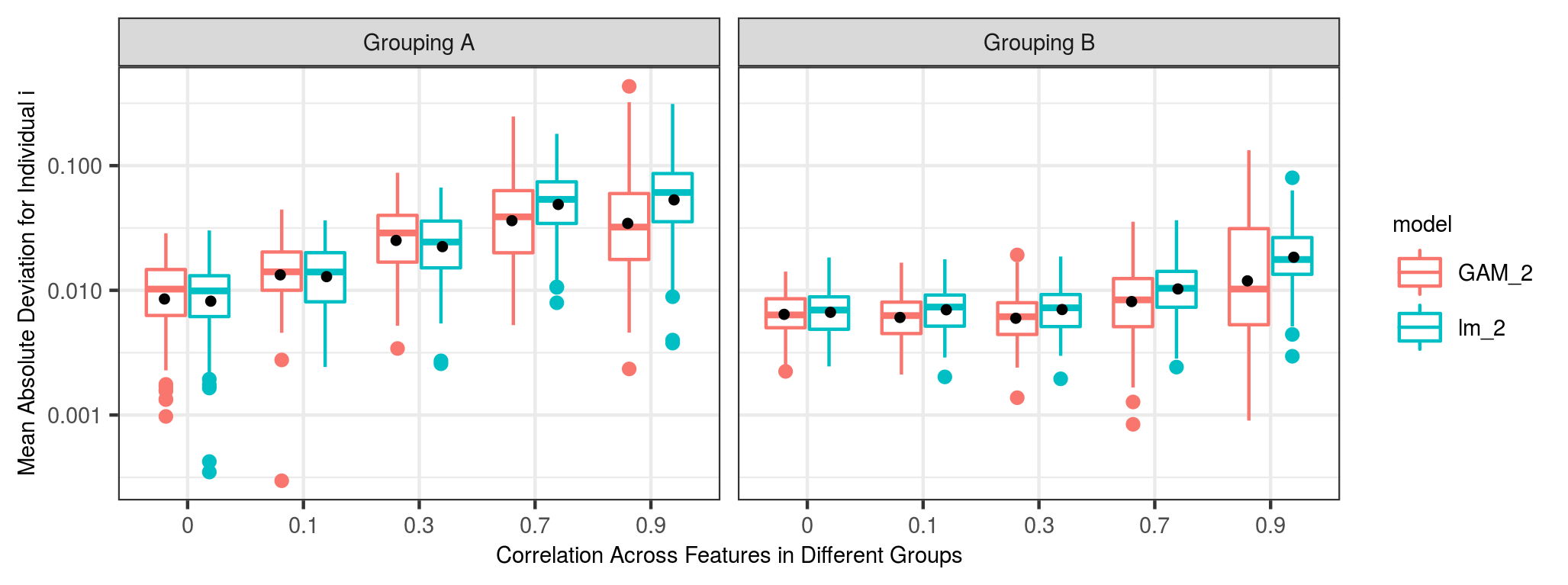}
    \caption{Boxplots of the mean absolute difference on log scale for all $lm_2$ and $GAM_2$ fit in Experiment 3. Black dot indicates the mean. The correlation between features of the same group is fixed at 0.87.}\label{fig:exper3-groupAB}
\end{figure}

The simulation study showed that dependence between features of \textit{different} groups and non-additive models often yield different \textit{groupShapley} and \textit{post-grouped Shapley} values. When using a simple linear model, the MAD increased with the dependence between the features and did not seem to be affected by the linear interactions. For the GAMs, the MAD was large for the model with non-linear interactions between features of \textit{different} groups, but did not increase further with increased dependence. 
Finally, we saw that using a fixed and non-zero between-group correlation did not significantly change the trends seen in Experiments 1 and 2.

\section{Real data example}\label{sec:real_data}
In this section, we demonstrate how \textit{groupShapley} can be applied to a real data set. We use a car insurance data set found on the Kaggle website (\url{https://www.kaggle.com/xiaomengsun/car-insurance-claim-data}). The data contains two different response variables, 23 features, and 10,302 observations. The response we use is the binary variable \textit{customer had a claim}. The features can be naturally partitioned into the following groups:
\begin{itemize}
    \item \textit{Personal Information}: age of driver, highest education level, number of children living at home, value of home, income, job category, number of driving children, martial status, single parent, gender, distance to work, whether driver lives in an urban city, how many years driver has had job.
    \item \textit{Track Record}: number of claims in the past five years, motor vehicle record points, licence revoked in past seven years, amount of time as customer.
    \item \textit{Car Information}: value of car, age of car, type of car, whether car is red.
\end{itemize}

Five of the variables have missing data so we use predictive mean matching to impute these.
To model the probability of a claim, we train a random forest model with 500 trees using the \verb,ranger, R-package \citep{ranger} on the binary response and all 23 features. We use 10 fold cross-validation and get an average out-of-fold AUC of 0.815. The average predicted probability of a claim is 0.273. 
We then use \textit{groupShapley} to calculate the group Shapley values of the \textit{Personal Information}, \textit{Car Information}, and \textit{Track Record} groups for four different individuals. 
Since there is a mix of continuous, discrete and categorical features, the conditional inference tree approach of \citet{redelmeier2020explaining} is used to estimate \eqref{eq:vT}.
We plot the three grouped Shapley values for four different individuals in Figure \ref{fig:real-ex}.

The first individual is a single mother of four (where two children drive). She drives an SUV and drives 27 miles to work. She has had one claim in the last five years and has three motor vehicle record points. 
\textit{Personal Information} gives the largest increase in the predicted probability, which is not surprising given her travel distance and two young drivers.
The second individual is a 37-year-old father of two (where one child drives). He has had one claim in the last five years, his licence revoked in the last seven years, and ten motor vehicle points. His \textit{Track Record} significantly increases his predicted probability, which is natural given his misdemeanors.

The third individual is a 60-year-old married male with no kids at home. He drives a red sports car and has had three claims in the last five years. He has a PhD and currently works as a doctor. His \textit{Personal Information} naturally reduces his predicted probability, while his poor \textit{Track Record}, and to some extent his luxurious \textit{Car Information} increases his predicted probability.
The fourth individual is a 50-year-old female with no kids at home. She drives a minivan and has no previous claims or revoked licences. She also has a PhD and drives 42 miles to work. She appears to be on the safer side of things, which is reflected in all negative \textit{groupShapley} values and a smaller predicted probability.
\begin{figure}
    \centering
    \includegraphics[width = 8cm, height = 7cm]{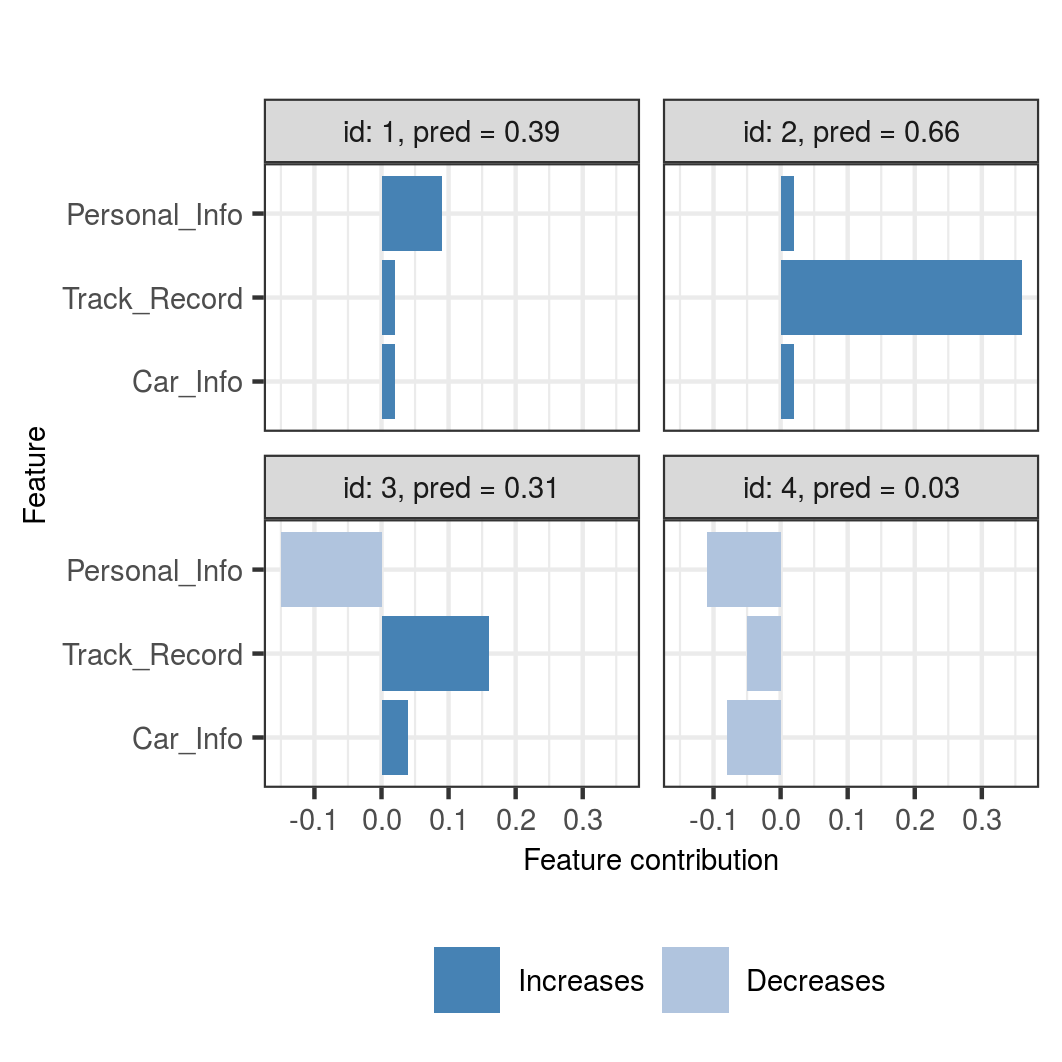}
    \caption{Estimated \textit{groupShapley} values for four individuals with the groups defined in Section \ref{sec:real_data}.}
    \label{fig:real-ex}
\end{figure}

This real world example and Figure \ref{fig:real-ex} exemplifies that \textit{groupShapley} is a simple and efficient way to produce intuitive prediction explanations.

\section{Concluding remarks}\label{sec:conclusion}
In addition to explaining predictions based on feature groups, \textit{groupShapley} can be utilized to compute the relevant feature-wise scores in the case where there are categorical features which are one-hot-encoded when passed to the model $f(\cdot)$. In this case, one is not really interested in the Shapley values for the individual one-hot-encoded features, but for the original features. One can then group the one-hot-encoded features into one group and apply \textit{groupShapley}. This might be viewed as an alternative to the approach in \citet{redelmeier2020explaining} for computing feature-wise Shapley values in the presence of categorical features.

Another possible application of \textit{groupShapley} is to time series classification settings \citep{kvamme2018predicting,ordonez2016deep}. Suppose the model takes one or more time series as input and predicts class probabilities. It may not be interesting or even relevant to compute Shapley values per time point, but by splitting the time series into different chunks, \textit{groupShapley} can be used to explain how different parts of the time series contribute.

It is worth noting that \textit{groupShapley} is a highly general, and computationally efficient solution that can be used in a range of other Shapley value settings. In addition to the prediction explanation setting, it can be reused to bypass the computational burden for \textit{global} explanation methods \citep{gromping2007estimators, Owen17}, or even outside the fields of explainable artificial intelligence and interpretable machine learning \citep{moretti2008transversality}.

Finally, the groupShapley methodology is implemented in the \verb,shapr, R-package \citep{Sellereite2020}\footnote{As of February 2021, the groupShapley methodology is only available in the development version of the package on GitHub: \url{https://github.com/NorskRegnesentral/shapr}.}.

\section*{Ackowledgements}
We thank Anders L{\o}land for useful discussions in the early-stages with this paper and Jens Christian Wahl for discussions related to the simulation study.
This work was supported by the Norwegian Research Council grant 237718 (Big Insight).

\bibliography{bibliography}

\begin{thebibliography}{}

\bibitem[Aas et~al., 2019]{aas2019explaining}
Aas, K., Jullum, M., and L{\o}land, A. (2019).
\newblock Explaining individual predictions when features are dependent: More
  accurate approximations to shapley values.
\newblock {\em arXiv preprint arXiv:1903.10464}.

\bibitem[Aas et~al., 2021]{aas2021explaining}
Aas, K., Nagler, T., Jullum, M., and L{\o}land, A. (2021).
\newblock Explaining predictive models using shapley values and non-parametric
  vine copulas.
\newblock {\em arXiv preprint arXiv:2102.06416}.

\bibitem[Algaba et~al., 2019]{algaba2019handbook}
Algaba, E., Fragnelli, V., and S{\'a}nchez-Soriano, J. (2019).
\newblock {\em Handbook of the Shapley value}.
\newblock CRC Press.

\bibitem[Chen et~al., 2020]{chen2020true}
Chen, H., Janizek, J.~D., Lundberg, S., and Lee, S.-I. (2020).
\newblock True to the model or true to the data?
\newblock {\em arXiv preprint arXiv:2006.16234}.

\bibitem[Chen et~al., 2019]{chen2018lshapley}
Chen, J., Song, L., Wainwright, M.~J., and Jordan, M.~I. (2019).
\newblock L-shapley and c-shapley: Efficient model interpretation for
  structured data.
\newblock In {\em International Conference on Learning Representations}.

\bibitem[Covert and Lee, 2020]{covert2020improving}
Covert, I. and Lee, S.-I. (2020).
\newblock Improving kernelshap: Practical shapley value estimation via linear
  regression.
\newblock {\em arXiv preprint arXiv:2012.01536}.

\bibitem[Frye et~al., 2020]{frye2020shapley}
Frye, C., de~Mijolla, D., Cowton, L., Stanley, M., and Feige, I. (2020).
\newblock Shapley-based explainability on the data manifold.
\newblock {\em arXiv preprint arXiv:2006.01272}.

\bibitem[Gr{\"o}mping, 2007]{gromping2007estimators}
Gr{\"o}mping, U. (2007).
\newblock Estimators of relative importance in linear regression based on
  variance decomposition.
\newblock {\em The American Statistician}, 61(2):139--147.

\bibitem[Janzing et~al., 2020]{janzing2020feature}
Janzing, D., Minorics, L., and Bl{\"o}baum, P. (2020).
\newblock Feature relevance quantification in explainable ai: A causal problem.
\newblock In {\em International Conference on Artificial Intelligence and
  Statistics}, pages 2907--2916.

\bibitem[Kaufman and Rousseeuw, 2009]{kaufman2009finding}
Kaufman, L. and Rousseeuw, P.~J. (2009).
\newblock {\em Finding groups in data: an introduction to cluster analysis},
  volume 344.
\newblock John Wiley \& Sons.

\bibitem[Kvamme et~al., 2018]{kvamme2018predicting}
Kvamme, H., Sellereite, N., Aas, K., and Sjursen, S. (2018).
\newblock Predicting mortgage default using convolutional neural networks.
\newblock {\em Expert Systems with Applications}, 102:207--217.

\bibitem[Li et~al., 2019]{li2019efficient}
Li, X., Dvornek, N.~C., Zhou, Y., Zhuang, J., Ventola, P., and Duncan, J.~S.
  (2019).
\newblock Efficient interpretation of deep learning models using graph
  structure and cooperative game theory: Application to asd biomarker
  discovery.
\newblock In {\em International Conference on Information Processing in Medical
  Imaging}, pages 718--730. Springer.

\bibitem[Lundberg et~al., 2020]{lundberg2020local2global}
Lundberg, S.~M., Erion, G., Chen, H., DeGrave, A., Prutkin, J.~M., Nair, B.,
  Katz, R., Himmelfarb, J., Bansal, N., and Lee, S.-I. (2020).
\newblock From local explanations to global understanding with explainable ai
  for trees.
\newblock {\em Nature Machine Intelligence}, 2(1):2522--5839.

\bibitem[Lundberg and Lee, 2017]{Lundberg}
Lundberg, S.~M. and Lee, S.-I. (2017).
\newblock {A Unified Approach to Interpreting Model Predictions}.
\newblock In {\em {Advances in Neural Information Processing Systems}}, pages
  4768--4777. Curram Associates Inc.

\bibitem[Marichal et~al., 2007]{marichal2007axiomatic}
Marichal, J.-L., Kojadinovic, I., and Fujimoto, K. (2007).
\newblock Axiomatic characterizations of generalized values.
\newblock {\em Discrete Applied Mathematics}, 155(1):26--43.

\bibitem[Molnar, 2020]{molnar2020interpretable}
Molnar, C. (2020).
\newblock {\em Interpretable machine learning: A guide for making black box
  models explainable}.
\newblock Christoph Molnar, Leanpub.

\bibitem[Moretti and Patrone, 2008]{moretti2008transversality}
Moretti, S. and Patrone, F. (2008).
\newblock Transversality of the shapley value.
\newblock {\em Top}, 16(1):1--41.

\bibitem[Ord{\'o}{\~n}ez and Roggen, 2016]{ordonez2016deep}
Ord{\'o}{\~n}ez, F.~J. and Roggen, D. (2016).
\newblock Deep convolutional and lstm recurrent neural networks for multimodal
  wearable activity recognition.
\newblock {\em Sensors}, 16(1):115.

\bibitem[Owen and Prieur, 2017]{Owen17}
Owen, A.~B. and Prieur, C. (2017).
\newblock {On Shapley value for measuring importance of dependent inputs}.
\newblock {\em {SIAM/ASA Journal on Uncertainty Quantification}}, 5:986--1002.

\bibitem[Redelmeier et~al., 2020]{redelmeier2020explaining}
Redelmeier, A., Jullum, M., and Aas, K. (2020).
\newblock Explaining predictive models with mixed features using shapley values
  and conditional inference trees.
\newblock In {\em International Cross-Domain Conference for Machine Learning
  and Knowledge Extraction}, pages 117--137. Springer.

\bibitem[Sellereite and Jullum, 2020]{Sellereite2020}
Sellereite, N. and Jullum, M. (2020).
\newblock shapr: An r-package for explaining machine learning models with
  dependence-aware shapley values.
\newblock {\em Journal of Open Source Software}, 5(46):2027.

\bibitem[Shapley, 1953]{Shapley53}
Shapley, L.~S. (1953).
\newblock {A Value for N-Person Games}.
\newblock {\em {Contributions to the Theory of Games}}, 2:307--317.

\bibitem[Sundararajan and Najmi, 2020]{sundararajan2020many}
Sundararajan, M. and Najmi, A. (2020).
\newblock The many shapley values for model explanation.
\newblock In {\em International Conference on Machine Learning}, pages
  9269--9278. PMLR.

\bibitem[Wright and Ziegler, 2017]{ranger}
Wright, M.~N. and Ziegler, A. (2017).
\newblock {ranger}: A fast implementation of random forests for high
  dimensional data in {C++} and {R}.
\newblock {\em Journal of Statistical Software}, 77(1):1--17.

\end{thebibliography}

\appendix

\section{Proofs}
This appendix contains proofs of Theorem \ref{lem:simplified_shap} and \ref{thm:shap_groupShapley}. Before heading to those proofs, we will present and prove two lemmas that will be useful in those proofs.

\begin{lemma}[Contribution function identity]
\label{cor:vS}
Assume that Conditions \ref{con:additive} and \ref{con:indep} holds. 
Further, let $\mathcal{T}_0$ and $\mathcal{T}_{AB}$ be any two disjoint subsets of the partition $\mathbfcal{G} = \{\mathcal{G}_1,\ldots,\mathcal{G}_G\}$ and $\mathcal{S}_{0}, \mathcal{S}_{A}, \mathcal{S}_{B}$ be any three feature-subsets where $\mathcal{S}_{0} \subseteq \mathcal{T}_0$, $\mathcal{S}_{A}, \subseteq \mathcal{T}_{AB}$, 
$\mathcal{S}_{B}, \subseteq \mathcal{T}_{AB}$. Then, we have that 
\begin{align}
v(\mathcal{S}_0 \cup \mathcal{S}_A) -v(\mathcal{S}_0 \cup \mathcal{S}_B) = v(\mathcal{S}_A) - v(\mathcal{S}_B). \label{eq:vS_diffs}
\end{align}
\end{lemma}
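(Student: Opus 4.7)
The plan is to reduce the identity to a statement about the group-level contribution functions $v_{\mathcal{G}_i}$ and then exploit the disjointness of $\mathcal{T}_0$ and $\mathcal{T}_{AB}$ to make the $\mathcal{S}_0$ contribution cancel.

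First, I would combine Conditions \ref{con:additive} and \ref{con:indep} to establish a decomposition formula for the contribution function. By additive separability, $f(\boldsymbol{x}) = \sum_{i=1}^G f_{\mathcal{G}_i}(\boldsymbol{x}_{\mathcal{G}_i})$, so by linearity of expectation $v(\mathcal{S}) = \sum_{i=1}^G \mathbb{E}[f_{\mathcal{G}_i}(\boldsymbol{x}_{\mathcal{G}_i})\mid \boldsymbol{x}_{\mathcal{S}}=\boldsymbol{x}_{\mathcal{S}}^*]$ for any feature set $\mathcal{S}$. By group independence, $\boldsymbol{x}_{\mathcal{G}_i}$ is independent of any features outside $\mathcal{G}_i$, so conditioning on $\boldsymbol{x}_{\mathcal{S}}$ is equivalent to conditioning on $\boldsymbol{x}_{\mathcal{S}\cap \mathcal{G}_i}$. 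This yields the key identity
\begin{equation*}
v(\mathcal{S}) \;=\; \sum_{i=1}^G v_{\mathcal{G}_i}(\mathcal{S}\cap \mathcal{G}_i),
\end{equation*}
where I interpret $v_{\mathcal{G}_i}(\emptyset) = \mathbb{E}[f_{\mathcal{G}_i}(\boldsymbol{x}_{\mathcal{G}_i})]$.

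Second, I would apply this decomposition to each of the four terms in \eqref{eq:vS_diffs}. Because $\mathcal{S}_0 \subseteq \mathcal{T}_0$, $\mathcal{S}_A,\mathcal{S}_B \subseteq \mathcal{T}_{AB}$, and $\mathcal{T}_0 \cap \mathcal{T}_{AB} = \emptyset$, for each group $\mathcal{G}_i$ only one of the sets $\mathcal{S}_0$, $\mathcal{S}_A$, $\mathcal{S}_B$ can have a nonempty intersection with it. Thus $(\mathcal{S}_0\cup \mathcal{S}_A)\cap \mathcal{G}_i$ equals $\mathcal{S}_0\cap \mathcal{G}_i$ when $\mathcal{G}_i\in\mathcal{T}_0$, $\mathcal{S}_A\cap \mathcal{G}_i$ when $\mathcal{G}_i\in\mathcal{T}_{AB}$, and $\emptyset$ otherwise; the analogous statement holds for $\mathcal{S}_0\cup\mathcal{S}_B$. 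Subtracting, the contributions from groups in $\mathcal{T}_0$ and from groups outside $\mathcal{T}_0\cup \mathcal{T}_{AB}$ cancel, leaving only $\sum_{\mathcal{G}_i\in \mathcal{T}_{AB}}\bigl[v_{\mathcal{G}_i}(\mathcal{S}_A\cap \mathcal{G}_i)-v_{\mathcal{G}_i}(\mathcal{S}_B\cap \mathcal{G}_i)\bigr]$, which is exactly the decomposition of $v(\mathcal{S}_A)-v(\mathcal{S}_B)$.

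The argument is conceptually a one-line cancellation once the decomposition is in place, so I expect no real difficulty; the only thing to be careful about is to verify that group independence actually licenses the reduction $\mathbb{E}[f_{\mathcal{G}_i}(\boldsymbol{x}_{\mathcal{G}_i})\mid \boldsymbol{x}_{\mathcal{S}}=\boldsymbol{x}_{\mathcal{S}}^*] = v_{\mathcal{G}_i}(\mathcal{S}\cap\mathcal{G}_i)$ in full generality, and to keep the set-theoretic bookkeeping of which groups each $\mathcal{S}_\bullet$ can touch consistent throughout. Since this reduction and the bookkeeping will be reused in the proofs of Theorems \ref{lem:simplified_shap} and \ref{thm:shap_groupShapley}, it is natural to isolate it as Lemma \ref{cor:vS} here.
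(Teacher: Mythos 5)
Your proposal is correct and follows essentially the same route as the paper's proof: decompose $v(\mathcal{S})$ via additive separability and group independence into a sum of per-group terms $v_{\mathcal{G}_i}(\mathcal{S}\cap\mathcal{G}_i)$ (with the unconditional expectation for groups untouched by $\mathcal{S}$), then expand all four terms and cancel. The only difference is cosmetic notation; the paper writes the decomposition by splitting the index set into groups intersecting $\mathcal{S}$ and the rest, whereas you fold both cases into one formula.
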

\begin{proof}
Let us write $\boldsymbol{x}_{\mathcal{G}_i}$ for the subvector of $\boldsymbol{x}$ corresponding to the features in group $\mathcal{G}_i$. Let $\mathbfcal{G}_{\mathcal{S}}$ be the set of group indexes with at least one feature in the subset $\mathcal{S}$, i.e.~$\mathbfcal{G}_{\mathcal{S}} = \{i: \mathcal{G}_i \cap S \neq \emptyset\}$, and let $\overline{\mathbfcal{G}}_{\mathcal{S}}$ be  the group indexes not part of $\mathbfcal{G}_{\mathcal{S}}$. To show the equality in \eqref{eq:vS_diffs}, we will first split $v(\mathcal{S})$ into simpler terms by utilizing the additive structure 
$f(\boldsymbol{x}) = \sum_{i = 1}^G f_{\mathcal{G}_i}(\boldsymbol{x}_{\mathcal{G}_i})$ and the group independence $\boldsymbol{x}_{\mathcal{G}_i} \indep \boldsymbol{x}_{\mathcal{G}_j}$ for all $i \neq j$.

We may then utilize Conditions \ref{con:additive} and \ref{con:indep} to decompose $v(\mathcal{S})$ into a sum of conditional expectations involving the groups associated with $\mathcal{S}$ and unconditional expectations for the remaining groups: 
\begin{align}
v(\mathcal{S}) &= \mathbb{E}[f(\boldsymbol{x})|\boldsymbol{x}_{\mathcal{S}}] = \mathbb{E}\left[\sum_{i = 1}^G f_{\mathcal{G}_i}(\boldsymbol{x}_{\mathcal{G}_i}) \bigg| \boldsymbol{x}_{\mathcal{S}}\right] = 
\sum_{i = 1}^G \mathbb{E}\left[ f_{\mathcal{G}_i}(\boldsymbol{x}_{\mathcal{G}_i}) | \boldsymbol{x}_{\mathcal{S}}\right] \notag \\
&=
\sum_{i \in \mathbfcal{G}_{\mathcal{S}}} \mathbb{E}[f_{\mathcal{G}_i}(\boldsymbol{x}_{\mathcal{G}_i}) | \boldsymbol{x}_{\mathcal{S}}] + 
\sum_{i \in \overline{\mathbfcal{G}}_{\mathcal{S}}} \mathbb{E}[f_{\mathcal{G}_i}(\boldsymbol{x}_{\mathcal{G}_i}) | \boldsymbol{x}_{\mathcal{S}}] \notag \\
 & = \sum_{i \in \mathbfcal{G}_{\mathcal{S}}} \mathbb{E}[f_{\mathcal{G}_i}(\boldsymbol{x}_{\mathcal{G}_i}) | \boldsymbol{x}_{\mathbfcal{G}_i \cap \mathcal{S}}] + 
\sum_{i \in \overline{\mathbfcal{G}}_{\mathcal{S}}} \mathbb{E}[f_{\mathcal{G}_i}(\boldsymbol{x}_{\mathcal{G}_i})].
\label{eq:simple-vs}
\end{align}
Now, proving the equality in \eqref{eq:vS_diffs} is just about utilizing the separability of the formula in \eqref{eq:simple-vs} and writing out the four terms:
\begin{align}
v(\mathcal{S}_0 \cup \mathcal{S}_A) & = \sum_{i \in \mathbfcal{G}_{\mathcal{S}_0}} \mathbb{E}[f_{\mathcal{G}_i}(\boldsymbol{x}_{\mathcal{G}_i}) | \boldsymbol{x}_{\mathcal{G}_i \cap \mathcal{S}_0}] + 
\sum_{i \in  \mathbfcal{G}_{\mathcal{S}_A}} \mathbb{E}[f_{\mathcal{G}_i}(\boldsymbol{x}_{\mathcal{G}_i}) | \boldsymbol{x}_{\mathcal{G}_i \cap \mathcal{S}_A}] + 
\sum_{i \in  \overline{\mathbfcal{G}}_{\mathcal{S}_0 \cup \mathcal{S}_A}} \mathbb{E}[f_{\mathcal{G}_i}(\boldsymbol{x}_{\mathcal{G}_i})], \notag \\
v(\mathcal{S}_0 \cup \mathcal{S}_B) & = \sum_{i \in \mathbfcal{G}_{\mathcal{S}_0}} \mathbb{E}[f_{\mathcal{G}_i}(\boldsymbol{x}_{\mathcal{G}_i}) | \boldsymbol{x}_{\mathcal{G}_i \cap \mathcal{S}_0}] + 
\sum_{i \in  \mathbfcal{G}_{\mathcal{S}_B}} \mathbb{E}[f_{\mathcal{G}_i}(\boldsymbol{x}_{\mathcal{G}_i}) | \boldsymbol{x}_{\mathcal{G}_i \cap \mathcal{S}_B}] + 
\sum_{i \in  \overline{\mathbfcal{G}}_{\mathcal{S}_0 \cup \mathcal{S}_B}} \mathbb{E}[f_{\mathcal{G}_i}(\boldsymbol{x}_{\mathcal{G}_i})], \notag \\
v(\mathcal{S}_A) & = \sum_{i \in \mathbfcal{G}_{\mathcal{S}_0}} \mathbb{E}[f_{\mathcal{G}_i}(\boldsymbol{x}_{\mathcal{G}_i})] + 
\sum_{i \in  \mathbfcal{G}_{\mathcal{S}_A}} \mathbb{E}[f_{\mathcal{G}_i}(\boldsymbol{x}_{\mathcal{G}_i}) | \boldsymbol{x}_{\mathcal{G}_i \cap \mathcal{S}_A}] + 
\sum_{i \in  \overline{\mathbfcal{G}}_{\mathcal{S}_0 \cup \mathcal{S}_A}}
\mathbb{E}[f_{\mathcal{G}_i}(\boldsymbol{x}_{\mathcal{G}_i})], \notag \\
v(\mathcal{S}_B) & = \sum_{i \in \mathbfcal{G}_{\mathcal{S}_0}} \mathbb{E}[f_{\mathcal{G}_i}(\boldsymbol{x}_{\mathcal{G}_i})] + 
\sum_{i \in  \mathbfcal{G}_{\mathcal{S}_B}} \mathbb{E}[f_{\mathcal{G}_i}(\boldsymbol{x}_{\mathcal{G}_i}) | \boldsymbol{x}_{\mathcal{G}_i \cap \mathcal{S}_B}] + 
\sum_{i \in  \overline{\mathbfcal{G}}_{\mathcal{S}_0 \cup \mathcal{S}_B}}
\mathbb{E}[f_{\mathcal{G}_i}(\boldsymbol{x}_{\mathcal{G}_i})]. \notag
\end{align}
From the above expressions we easily see that both sides of \eqref{eq:vS_diffs} equal
\begin{align}
&\sum_{i \in  \mathbfcal{G}_{\mathcal{S}_A}} \mathbb{E}[f_{\mathcal{G}_i}(\boldsymbol{x}_{\mathcal{G}_i}) | \boldsymbol{x}_{\mathcal{G}_i \cap \mathcal{S}_A}] - 
\sum_{i \in  \mathbfcal{G}_{\mathcal{S}_B}} \mathbb{E}[f_{\mathcal{G}_i}(\boldsymbol{x}_{\mathcal{G}_i}) | \boldsymbol{x}_{\mathcal{G}_i \cap \mathcal{S}_B}]  \notag \\
+& \sum_{i \in  \overline{\mathbfcal{G}}_{\mathcal{S}_0 \cup \mathcal{S}_A}}
\mathbb{E}[f_{\mathcal{G}_i}(\boldsymbol{x}_{\mathcal{G}_i})] -
\sum_{i \in  \overline{\mathbfcal{G}}_{\mathcal{S}_0 \cup \mathcal{S}_B}}
\mathbb{E}[f_{\mathcal{G}_i}(\boldsymbol{x}_{\mathcal{G}_i})], \notag
\end{align}
and the proof is complete.
\end{proof}

\begin{lemma}[Simplified contribution function identity]
\label{cor:vS2}
Assume that Conditions \ref{con:additive} and \ref{con:indep} holds and that $j \in \mathcal{G}_i$. Further, let $\mathcal{S}$ be a feature subset of $\mathcal{G}_i$ not containing $j$.
Define also $v_{\mathcal{G}_i}(\mathcal{S}) = \mathbb{E}[f_{\mathcal{G}_i}(\boldsymbol{x}_{\mathcal{G}_i}) | \boldsymbol{x}_{\mathcal{S}}]$.
Then we have that
\begin{align}
    v(\mathcal{S} + \{ j \}) - v(\mathcal{S}) =v_{\mathcal{G}_i}(\mathcal{S} + \{ j \}) - v_{\mathcal{G}_i}(\mathcal{S}). \notag
\end{align}
\end{lemma}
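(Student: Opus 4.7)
The plan is to obtain this identity as a direct specialization of the decomposition of $v(\mathcal{S})$ that was already derived in the proof of Lemma \ref{cor:vS}. The key observation is that both $\mathcal{S}$ and $\mathcal{S} \cup \{j\}$ are subsets of the single group $\mathcal{G}_i$, so in the notation of that proof we have $\mathbfcal{G}_{\mathcal{S}} = \mathbfcal{G}_{\mathcal{S} \cup \{j\}} = \{i\}$, which is exactly the simplifying structure that will cause most of the decomposition to cancel on subtraction.

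First I would invoke Condition \ref{con:additive} to write $f(\boldsymbol{x}) = \sum_{k=1}^G f_{\mathcal{G}_k}(\boldsymbol{x}_{\mathcal{G}_k})$ and then take the conditional expectation given $\boldsymbol{x}_{\mathcal{S}}$. Linearity yields $v(\mathcal{S}) = \sum_{k=1}^G \mathbb{E}[f_{\mathcal{G}_k}(\boldsymbol{x}_{\mathcal{G}_k}) \mid \boldsymbol{x}_{\mathcal{S}}]$. For $k = i$, the term is exactly $v_{\mathcal{G}_i}(\mathcal{S})$ by definition. For each $k \neq i$, Condition \ref{con:indep} gives $\boldsymbol{x}_{\mathcal{G}_k} \indep \boldsymbol{x}_{\mathcal{G}_i}$, and since $\mathcal{S} \subseteq \mathcal{G}_i$ we have $\boldsymbol{x}_{\mathcal{G}_k} \indep \boldsymbol{x}_{\mathcal{S}}$, so the conditioning drops out and the term reduces to the unconditional mean $\mathbb{E}[f_{\mathcal{G}_k}(\boldsymbol{x}_{\mathcal{G}_k})]$. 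Thus $v(\mathcal{S}) = v_{\mathcal{G}_i}(\mathcal{S}) + C$, where $C := \sum_{k \neq i} \mathbb{E}[f_{\mathcal{G}_k}(\boldsymbol{x}_{\mathcal{G}_k})]$ is a constant that does not depend on $\mathcal{S}$.

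Since $j \in \mathcal{G}_i$, we also have $\mathcal{S} \cup \{j\} \subseteq \mathcal{G}_i$, so the identical computation applied to $\mathcal{S} \cup \{j\}$ gives $v(\mathcal{S} \cup \{j\}) = v_{\mathcal{G}_i}(\mathcal{S} \cup \{j\}) + C$, with the same constant $C$. Subtracting the two identities makes $C$ cancel and delivers the claimed equality. There is no genuine obstacle here: the lemma is essentially a bookkeeping corollary of the decomposition in equation \eqref{eq:simple-vs}, and the only point requiring any care is verifying that $\mathcal{S}$ and $\mathcal{S}\cup\{j\}$ yield the same partition into the "target group $\mathcal{G}_i$" part and the "remaining groups" part, so that the latter contribution is common to both $v$-values and drops out under subtraction.
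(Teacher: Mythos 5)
Your proof is correct and follows essentially the same route as the paper's: both decompose $v(\mathcal{S})$ via Condition \ref{con:additive} into $v_{\mathcal{G}_i}(\mathcal{S})$ plus the constant $\sum_{k\neq i}\mathbb{E}[f_{\mathcal{G}_k}(\boldsymbol{x}_{\mathcal{G}_k})]$ (using Condition \ref{con:indep} to drop the conditioning for $k\neq i$), and observe that this constant cancels on subtraction. No gaps.
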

\begin{proof}
The result follows almost directly from writing out the expressions for $v$ in terms of the $f_{\mathcal{G}_k}$-functions. Since both $\mathcal{S}$ and $j$ only contain features in $\mathcal{G}_i$, we may, similarly to \eqref{eq:simple-vs}, write
\begin{align}
    v(\mathcal{S}) &= \sum_{k = 1}^G \mathbb{E}\left[ f_{\mathcal{G}_k}(\boldsymbol{x}_{\mathcal{G}_k}) | \boldsymbol{x}_{\mathcal{S}}\right] \notag \\
& = \mathbb{E}[f_{\mathcal{G}_i}(\boldsymbol{x}_{\mathcal{G}_i}) | \boldsymbol{x}_{\mathcal{S}}] + \sum_{k \neq i} \mathbb{E}[f_{\mathcal{G}_k}(\boldsymbol{x}_{\mathcal{G}_k})], \notag \\
    v(\mathcal{S}+\{j\}) &= \sum_{k = 1}^G \mathbb{E}\left[ f_{\mathcal{G}_k}(\boldsymbol{x}_{\mathcal{G}_k}) | \boldsymbol{x}_{\mathcal{S} + \{j\}}\right] \notag \\
& = \mathbb{E}[f_{\mathcal{G}_i}(\boldsymbol{x}_{\mathcal{G}_i}) | \boldsymbol{x}_{\mathcal{S}+ \{j\}}] + \sum_{k \neq i} \mathbb{E}[f_{\mathcal{G}_k}(\boldsymbol{x}_{\mathcal{G}_k})]. \notag
\end{align}
As the sums in the equations cancel each other out, we get
\begin{align}
     v(\mathcal{S} + \{j\}) -  v(\mathcal{S}) &= \mathbb{E}[f_{\mathcal{G}_i}(\boldsymbol{x}_{\mathcal{G}_i}) | \boldsymbol{x}_{\mathcal{S}+ \{j\}}] - \mathbb{E}[f_{\mathcal{G}_i}(\boldsymbol{x}_{\mathcal{G}_i}) | \boldsymbol{x}_{\mathcal{S}}] \notag \\
     &=v_{\mathcal{G}_i}(\mathcal{S} + \{ j \}) - v_{\mathcal{G}_i}(\mathcal{S}), \notag
\end{align}
which completes the proof.
\end{proof}

\subsection{Proof of Theorem 2.1}

We are now going to prove that the two conditions imply that the feature-wise Shapley formula takes the following simplifying form
	\begin{align}
    	\phi_j &= \sum_{\mathcal{S} \subseteq \mathcal{G}_i \setminus\{j\}}
    	\frac{|\mathcal{S}|! (|\mathcal{G}_i|-|\mathcal{S}|-1)!}{|\mathcal{G}_i|!} (v_{\mathcal{G}_i}(\mathcal{S} + \{ j \}) - v_{\mathcal{G}_i}(\mathcal{S})). \notag
    \end{align}
To do that, we will apply Lemma \ref{cor:vS}, perform a series of combinatorial operations, before finally applying Lemma \ref{cor:vS2}.
Recall that $j \in \mathcal{G}_i$. Using $\mathcal{R}$ in place of the standard feature-wise subset notation $\mathcal{S}$, and decompose it through the partition $\mathcal{R}$, i.e.~$\mathcal{S}\cup \mathcal{S}_0 = \mathcal{R}$.
We may then write the feature-wise Shapley value formula as
\begin{align}
\phi_j &= \sum_{\mathcal{R} \subseteq \mathcal{M} \setminus\{j\}} \frac{|\mathcal{R}| ! (M-| \mathcal{R}| - 1)!}{M!}(v(\mathcal{R}\cup \{j\})-v(\mathcal{R})) \notag \\
& =\sum_{\mathcal{S} \subseteq {\mathcal{G}_i} \setminus\{j\}} \sum_{\mathcal{S}_0 \subseteq \mathcal{M} \setminus {\mathcal{G}_i}} \frac{(|\mathcal{S}| + |\mathcal{S}_0|) ! (M-| \mathcal{S}| + |\mathcal{S}_0| - 1)!}{M!}(v(\mathcal{S}_0 \cup \mathcal{S} \cup \{j\})-v(\mathcal{S}_0 \cup \mathcal{S})). \label{eq:shap-group1}
\end{align}
Now, letting $\mathcal{S}_0 = \mathcal{S}_0, \mathcal{T}_0 = \mathcal{M}\setminus {\mathcal{G}_i}$ and $\mathcal{T}_{AB} = {\mathcal{G}_i}, \mathcal{S}_A = \mathcal{S} \cup \{j\}, \mathcal{S}_B = \mathcal{S}$ in Lemma \ref{cor:vS}, we see that 
\[ v(\mathcal{S}_0 \cup \mathcal{S} \cup \{j\})-v(\mathcal{S}_0 \cup \mathcal{S}) = 
v(\mathcal{S} \cup \{j\})-v(\mathcal{S}).\]
As a consequence, we may write \eqref{eq:shap-group1} as 
\begin{align}
\phi_j &= \sum_{\mathcal{S} \subseteq {\mathcal{G}_i} \setminus\{j\}}
(v(\mathcal{S} \cup \{j\})-v(\mathcal{S}))
\sum_{\mathcal{S}_0 \subseteq \mathcal{M} \setminus {\mathcal{G}_i}} \frac{(|\mathcal{S}| + |\mathcal{S}_0|) ! (M-| \mathcal{S}| - |\mathcal{S}_0| - 1)!}{M!}. \label{eq:phij}
\end{align}
Now, let us focus on the inner sum of \eqref{eq:phij}.
We will first rewrite the summand using the binomial coefficient and then rewrite this as a sum over the size of $\mathcal{S}_0$ using combinatorics:
\begin{align}
\sum_{\mathcal{S}_0 \subseteq \mathcal{M} \setminus {\mathcal{G}_i}} \frac{(|\mathcal{S}| + |\mathcal{S}_0|) ! (M-| \mathcal{S}| - |\mathcal{S}_0| - 1)!}{M!} 
= \frac{1}{M}\sum_{\mathcal{S}_0 \subseteq \mathcal{M} \setminus {\mathcal{G}_i}}
\frac{1}{\binom{M-1}{|\mathcal{S}|+|\mathcal{S}_0|}} = \frac{1}{M}\sum_{k=0}^{M-|{\mathcal{G}_i}|} \frac{\binom{M - |{\mathcal{G}_i}|}{k}}{\binom{M - 1}{|\mathcal{S}| + k}}. \label{eq:binomsum}
\end{align}

To further simplify the right hand side of \eqref{eq:binomsum}, we repeatedly utilize the identity
\begin{align}
\int_0^1 z^a(1-z)^b \textrm{d}z = \frac{a!\,b!}{(a+b+1)!}, \notag
\end{align}
for positive numbers $a$ and $b$. This is a well known property of the Beta function, and the integration constant of the Beta distribution.
Let us consider the denominator of the summand in \eqref{eq:binomsum}. Writing $a = |\mathcal{S}| + k$ and $b = M-1-|\mathcal{S}| - k$, we have that
\begin{align}
    \frac{1}{\binom{M - 1}{|\mathcal{S}| + k}} = \frac{1}{\binom{b+a}{a}} = \frac{a!\,b!}{(b+a)!} = (a+b+1)\int_0^1 z^a(1-z)^b \textrm{d}z = M\int_0^1 z^{|\mathcal{S}| + k} (1-z)^{M-1-|\mathcal{S}| - k} \textrm{d}z. \notag
\end{align}
Also utilizing that $\sum_{k=0}^q \binom{q}{k} r^k = (1+r)^k$, we then have that 
\begin{align}
\frac{1}{M}\sum_{k=0}^{M-|\mathcal{G}_i|} \frac{\binom{{M-|\mathcal{G}_i|}}{k}}{\binom{M-1}{{|\mathcal{S}|}+k}} &= \sum_{k=0}^{M-|\mathcal{G}_i|} {{M-|\mathcal{G}_i|} \choose k}\int_0^1 z^{{|\mathcal{S}|}+k}(1-z)^{M-1-{|\mathcal{S}|}-k}dz \notag\\
&= \int_0^1z^{|\mathcal{S}|}(1-z)^{M-1-{|\mathcal{S}|}}\sum_{k=0}^{M-|\mathcal{G}_i|} {{M-|\mathcal{G}_i|} \choose k}\left(\frac{z}{1-z}\right)^k\text{d}z \notag \\
&=\int_0^1z^{|\mathcal{S}|}(1-z)^{M-1-{|\mathcal{S}|}}\left(\frac 1 {1-z}\right)^{{M-|\mathcal{G}_i|}}\text{d}z \notag \\
&=\int_0^1z^{|\mathcal{S}|}(1-z)^{|\mathcal{G}_i|-{|\mathcal{S}|}-1}\text{d}z \notag \\
&=\frac{|\mathcal{S}|!(|\mathcal{G}_i|-|\mathcal{S}|-1)!}{|\mathcal{G}_i|}.  \notag 
\end{align}

Inserting this simplified formula into \eqref{eq:phij} gives  
	\begin{align}
    	\phi_j &= \sum_{\mathcal{S} \subseteq \mathcal{G}_i \setminus\{j\}}
    	\frac{|\mathcal{S}|! (|\mathcal{G}_i|-|\mathcal{S}|-1)!}{|\mathcal{G}_i|!} (v(\mathcal{S} + \{ j \}) - v(\mathcal{S})). \notag
    \end{align}
Since this a sum over subsets only contained in $\mathcal{G}_i$, application of Lemma \ref{cor:vS2} allows us to replace $v(\mathcal{S} + \{ j \}) - v(\mathcal{S})$ by $v_{\mathcal{G}_i}(\mathcal{S} + \{ j \}) - v_{\mathcal{G}_i}(\mathcal{S})$, and thereby completes the proof.~\newsquare

\subsection{Proof of Theorem 2.2}
We are now going to prove that under Conditions \ref{con:additive} and \ref{con:indep}, \textit{groupShapley} is equivalent to \textit{post-grouped Shapley} and they take a simplified form, i.e.~
\[
\sum_{j \in \mathcal{G}_i} \phi_j = \phi_{\mathcal{G}_i} = v_{\mathcal{G}_i}(\mathcal{G}_i)-v_{\mathcal{G}_i}(\emptyset), \quad i = 1,\ldots, G.
\]
We will start by showing that $\sum_{j \in \mathcal{G}_i} \phi_j = v_{\mathcal{G}_i}(\mathcal{G}_i)-v_{\mathcal{G}_i}(\emptyset)$. From Theorem \ref{lem:simplified_shap}, we have that the collection $\{\phi_j:j \in \mathcal{G}_i\}$ corresponds to the Shapley values of $f_{\mathcal{G}_i}(\boldsymbol{x}_{\mathcal{G}_i})$. Thus, by the efficiency axiom, we have that $\sum_{j \in \mathcal{G}_i} \phi_j = v_{\mathcal{G}_i}(\mathcal{G}_i)-v_{\mathcal{G}_i}(\emptyset)$. This completes the first part of the proof.


What remains is to show that under the state conditions, we also have $\phi_{\mathcal{G}_i} = v_{\mathcal{G}_i}(\mathcal{G}_i)-v_{\mathcal{G}_i}(\emptyset)$, for which we will utilize Lemma \ref{cor:vS} once again. Letting $\mathcal{S}_0 = \mathcal{T}, \mathcal{T}_0 = \mathbfcal{G}\setminus{\mathcal{G}_i}$ and $\mathcal{S}_A = \mathcal{T}_{AB} = \mathcal{G}_i, \mathcal{S}_B = \emptyset$, we see that 
\begin{align}
 v(\mathcal{T}\cup {\mathcal{G}_i})-v(\mathcal{T}) = v({\mathcal{G}_i}) - v(\emptyset).\label{eq:vv}
\end{align}
Furthermore, by writing out the two terms on the right hand side of \eqref{eq:vv}, as in the proof of Lemma \ref{cor:vS2}, we have that $v({\mathcal{G}_i}) = v_{\mathcal{G}_i}(\mathcal{G}_i) + \sum_{k \neq i} \mathbb{E}[f_{\mathcal{G}_k}(\boldsymbol{x}_{\mathcal{G}_k})]$ and $v(\emptyset) = v_{\mathcal{G}_i}(\emptyset) + \sum_{k \neq i} \mathbb{E}[f_{\mathcal{G}_k}(\boldsymbol{x}_{\mathcal{G}_k})]$. Thus, we get that $v({\mathcal{G}_i}) - v(\emptyset) = v_{\mathcal{G}_i}(\mathcal{G}_i)-v_{\mathcal{G}_i}(\emptyset)$.
Finally, since the Shapley value weights sum to 1, we have that
\begin{align}
\phi_{{\mathcal{G}_i}} 
&= \sum_{\mathcal{T} \subseteq \mathbfcal{G} \setminus {\mathcal{G}_i}} \frac{|\mathcal{T}| ! (|{\mathcal{G}_i}|-| \mathcal{T}| - 1)!}{|{\mathcal{G}_i}|!}(v(\mathcal{T}\cup {\mathcal{G}_i})-v(\mathcal{T})) \notag \\
&= (v_{\mathcal{G}_i}(\mathcal{G}_i)-v_{\mathcal{G}_i}(\emptyset))\sum_{\mathcal{T} \subseteq \mathbfcal{G} \setminus {\mathcal{G}_i}} \frac{|\mathcal{T}| ! (|{\mathcal{G}_i}|-| \mathcal{T}| - 1)!}{|{\mathcal{G}_i}|!}  \notag \\
&= v_{\mathcal{G}_i}(\mathcal{G}_i)-v_{\mathcal{G}_i}(\emptyset), \notag
\end{align}
which completes the proof.~\newsquare

\end{document}